\title{Reinforcement Learning with Information-Theoretic Actuation}
\author {
   Elliot Catt\textsuperscript{\rm 1\footnote{Contact Author}}, 
    Marcus Hutter\textsuperscript{\rm 1,\rm 2}, 
    Joel Veness \textsuperscript{\rm 2}
}
\newcommand{\cX}{\mathcal{X}}
\newcommand{\cdbar}{\,|\,}
\newcommand{\cbar}{\,|\,}
\newcommand{\cA}{{\cal A}}
\newcommand{\cR}{{\cal R}}
\newcommand{\cP}{\mu}
\newcommand{\E}{\mathbb{E}}
\newcommand{\cSP}{{\cal P}}
\newcommand{\cS}{\mathcal{S}}
\newcommand{\cI}{\mathcal{I}}
\newcommand{\cZ}{\mathcal{Z}}
\newcommand{\SetN}{\mathbb{N}}
\newcommand{\SetA}{\mathbb{A}}
\newcommand{\SetB}{\mathbb{B}}
\newcommand{\SetR}{\mathbb{R}}
\newcommand{\SetD}{\mathbb{D}}
\newcommand{\strin}{\in}
\newcommand{\strnotin}{\notin}
\def\cN{{\cal N}}               
\newenvironment{keywords}{\centerline{\bf\small Keywords}\vspace{-1.5ex}\begin{quote}\small}{\par\end{quote}\vskip 1ex}
\newtheorem{thm}{Theorem}
\newtheorem{propapdx}[thm]{Proposition}
\newtheorem{defn}[thm]{Definition}
\renewcommand\nomgroup[1]{%
  \item[\bfseries
  \ifstrequal{#1}{S}{Sets}{%
  \ifstrequal{#1}{F}{Functions}{%
  \ifstrequal{#1}{E}{Elements}{}}}%
]}
\tikzstyle{arrow} = [thick,->,>=stealth]
\tikzstyle{pol} = [rectangle, rounded corners, minimum width=3cm, minimum height=1cm,text centered, draw=black, fill=red!30]
\tikzstyle{coder} = [rectangle, rounded corners, minimum width=3cm, minimum height=1cm,text centered, draw=black, fill=blue!30]
\tikzstyle{models} = [rectangle, rounded corners, minimum width=3cm, minimum height=1cm,text centered, draw=black, fill=orange!30]
\tikzstyle{envi} = [rectangle, rounded corners, minimum width=3cm, minimum height=1cm,text centered, draw=black, fill=green!30]
\tikzstyle{sol} = [rectangle, rounded corners, minimum width=3cm, minimum height=1cm,text centered, draw=black]
\begin{document}

\maketitle

\begin{abstract}
Reinforcement Learning formalises an embodied agent's interaction with the environment through observations, rewards and actions. But where do the actions come from? Actions are often considered to represent something external, such as the movement of a limb, a chess piece, or more generally, the output of an actuator. In this work we explore and formalize a contrasting view, namely that actions are best thought of as the output of a sequence of internal choices with respect to an action model. This view is particularly well-suited for leveraging the recent advances in large sequence models as prior knowledge for multi-task reinforcement learning problems. Our main contribution in this work is to show how to augment the standard MDP formalism with a sequential notion of internal action using information-theoretic techniques, and that this leads to self-consistent definitions of both internal and external action value functions.
\end{abstract}

\begin{keywords}
\vspace{0.5em}
Reinforcement Learning, large action spaces, compression, coding, internal actions, sampling.
\end{keywords}

\section{Introduction}

It is hard to speak of embodied agents these days without mentioning or appealing to some notion of Reinforcement Learning.
This particular mathematical formalism has been so successful of late that the validity of its various modelling assumptions rarely gets called into question.
Yet recently we have seen a step-change in the capabilities of generative modelling, with the most striking example being in multi-modal language applications; the acquisition of gigantic multi-task datasets via internet scraping and scalable approaches to training has led to a renewed excitement for building next generation question-answering systems, chat bots, productivity tools, sentiment analysis, and in some circles, has even produced a newfound sense of optimism that the original goals of Artificial Intelligence may well be obtainable within our lifetimes.

Yet what does this mean for Reinforcement Learning? 
While its success in restricted domains is no longer in doubt, questions remain about its long-term viability as a foundational paradigm for Artificial Intelligence.
For example, effective exploration, even in restricted settings such as finite MDPs, is problematic in large unstructured state spaces, with various lower bounds demonstrating polynomial dependence on the size of the state space, e.g. \cite{strehl2009}.
While there are some noteworthy recent examples of hard exploration problems being overcome by clever heuristics \cite{ecoffet2021goexplore}, the situation in general looks challenging, if not dire.
On the other hand, recent advances in sequence modelling combined with the acquisition of gigantic datasets via internet scraping has led to a seeming step-change \cite{brown2020language} in the ability of  various types of probabilistic models to generate plausible continuations.
Is there a way to leverage this, while keeping the basic reinforcement learning formalism and derived notions such as value functions, policies, return, etc intact?

\begin{figure}[t!]
\resizebox{.99\linewidth}{!}{
\centering
\begin{tikzpicture}[node distance=2cm,style={align=center}]
\node (intpol) [pol] {Internal Policy $\pi$};
\node (ad) [coder, below of=intpol] {Arithmetic Decoder $D_\rho$};
\node (lang) [models, below of=ad] {Action Model $\rho$};
\node (extenv) [envi, below of=lang] {Environment $\mu$};
\node (adplang) [draw=black!50, fit={(ad) (lang)}] {};
\node [below=0.3cm of ad] {$+$};
\draw [arrow,dashed] (intpol.east) to [out=10, in=30, looseness=1] node[anchor=west] {Internal action\\$b\in \mathbb{B}$} (adplang.east);
\draw [arrow,dashed] (adplang) to [out=170,in=170, looseness = 1] node[anchor=east] {Internal reward / state\\$(s_t,q)\in \cS \times\mathbb{B}^{*} $} (intpol.west);
\draw [arrow] (adplang.west) to [out=190,in=190] node[anchor=east] {External action\\$a_t\in\cA$} (extenv.west);
\draw [arrow] (extenv.east) to [out=10,in=-10, looseness = 1] node[anchor=west] {External\\state / reward\\$s_t,r_t\in \cS \times\cR$} (adplang);
\end{tikzpicture}
}
\caption{Agent-environment loop with internal actions.}
\label{fig:internal_agent_env_loop}
\vspace{-1em}
\end{figure}
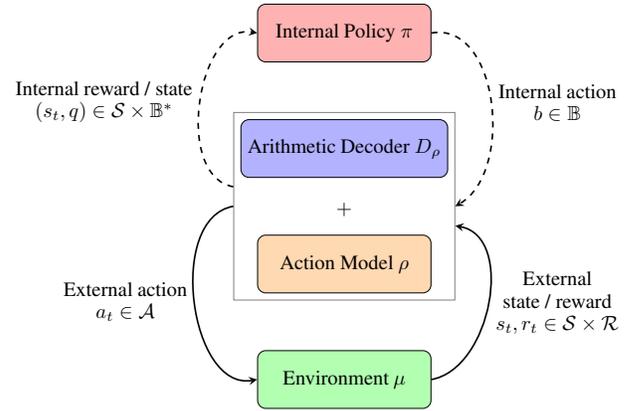

Our proposal argues for rethinking the fundamental notion of action in reinforcement learning. 
Actions are often considered to represent something external, such as the movement of a limb, a chess piece, or more generally, the output of an actuator.
In this work however, we develop a generic notion of \emph{internal} action, which is implied by a choice of action model $\rho$.
The key technical insight we leverage is the well-known duality between optimal lossless coding strategies and probabilities from information theory.
At a high level, instead of an agent directly picking an action from the action space $\cA$, instead it will pick a sequence of internal actions from an internal action set $\mathbb{B}$ which will decode to an external action from $\cA$.
Figure \ref{fig:internal_agent_env_loop} depicts this interaction graphically. 

So what do we gain by introducing this particular layer of indirection in the agent's choice of action?
Breaking up an action into a series of internal actions seems like a reasonable approach to dealing with large action spaces, and indeed has been used in other planning settings, but it immediately throws up a number of questions.
How do we decompose an arbitrary action space? 
Is there a universal, or in some sense optimal decomposition? 
When should the agent stop generating internal actions and communicate an external action to the environment? 
Does this even make sense in a reinforcement learning setting? 
How do we leverage prior knowledge in the form of a default policy?
Are there ramifications for multi-task RL?
Can we efficiently compute or sample good actions?
This paper will argue that our particular information-theoretic decomposition using an arithmetic decoder coupled with a coding distribution implied by a choice of action model naturally addresses all these questions, and opens up the possibly of leveraging recent advances in meta-learning and large-scale language/sequence models to deal with large problems using existing RL techniques.

\paragraph{Content.}
The paper is structured as follows:
Section \ref{sec:preliminaries} reviews some background material and establishes some notation; 
Section \ref{sec:internal_actions} introduces internal actions, with Section \ref{sec:internal_external_rl} formally establishing the connection between internal/external agents and environments;
Section \ref{sec:universal_action_interface} shows how the internal action framework naturally accommodates multi-task reinforcement learning settings with different action spaces.
We conclude with an extended discussion in Section \ref{sec:discussion} and cover related and future work in Sections \ref{sec:related_work} and \ref{sec:future_work}.

\section{Preliminaries}\label{sec:preliminaries}

We now briefly review the necessary background material required to describe our internal action agent-environment interaction loop.

\paragraph{Sequential Prediction.}\label{sec:compression_prediction}

A finite alphabet $\cX$ is a set of symbols.
A string of symbols $x_1x_2 \ldots x_n \in \cX^n$ of length $n$ is denoted by $x_{1:n}$.
The prefix $x_{1:j}$ of $x_{1:n}$, $j\leq n$, is denoted by $x_{\leq j}$ or $x_{< j+1}$.
The empty string is denoted by $\epsilon$.
The set of strings whose symbols come from the alphabet $\cX$ with length at most $n$ is defined by $\cX^{\leq n} := \{ \epsilon \} \cup \bigcup_{i=1}^n \cX^i$.
The set of strings of symbols from alphabet $\cX$ with finite length is denoted by $\cX^*:=\{\epsilon \} \cup \bigcup_{i=1}^{\infty}\cX^i$.
The concatenation of two strings $x$ and $y$ is denoted by $xy$.
The length of a string $x$ will be denoted by $|x|$.
We will use $y\strin x$ to denote that the symbol $y$ is in the string $x$.

A (coding) distribution $\rho$ is a sequence of probability mass functions $\rho_n : \cX^n \to [0,1]$, which for all $n\in\mathbb{N}$ satisfy the constraint that 
$\rho_n(x_{1:n}) = \sum_{y\in\cX} \rho_{n+1}(x_{1:n}y)$
for all $x_{1:n} \in \cX^n$, with the base case $\rho_0(\epsilon) := 1$.
From here onwards, whenever the meaning is clear from the argument to $\rho$, the subscript on $\rho$ will be dropped.
Under this definition, the conditional probability of a symbol $x_n$ given previous data $x_{<n}$ is defined as $\rho(x_n | x_{<n}) := \rho(x_{1:n}) / \rho(x_{<n})$ provided $\rho(x_{<n}) > 0$, with the familiar chain rules $\rho(x_{1:n}) = \prod_{i=1}^n \rho(x_i | x_{<i})$ and $\rho(x_{j:k} \cdbar x_{<j}) = \prod_{i=j}^k \rho(x_i | x_{<i})$ now following.
We will use $\Delta(\cX)$ to denote the space of probability distributions over $\cX$.

\paragraph{Arithmetic Encoding / Decoding.}
A fundamental technique known as \emph{arithmetic encoding} \citep{rissanen1979arithmetic,witten1987arithmetic} makes explicit the connection between coding distributions and source codes.
Binary arithmetic encoding is a general purpose parameterized technique that takes in a distribution $\rho$ (known as a coding distribution) and some data $x_{1:n}\in\cX^n$, and produces a uniquely decodable binary codeword $C_\rho(x_{1:n}) \in \{0,1\}^*$, whose length is essentially $\lceil -\log_2 \rho(x_{1:n}) \rceil$, which is optimal in terms of expected length if the data is sampled from $\rho$.
In essence, shorter binary codewords are assigned to data which has a higher chance of occurring under $\rho$, and longer binary codewords are assigned to the less probable data items.
Arithmetic decoding is the reverse of this procedure; it takes a coding distribution $\rho$, a binary code word $y_{1:k} = C_\rho(x_{1:n})$, and returns the original data $D_\rho(y_{1:k})=x_{1:n}$.
We will also use the shorthand notation $D_\rho(y_{1:k} \,|\, s) := D_{\rho(\cdot | s)}(y_{1:k})$ to denote decoding with respect to a coding distribution conditioned on the string $s$.
We refer the reader to the standard text of \cite{cover1999elements} for further information.

\paragraph{Markov Decision Processes.}\label{sec:background_mdps}
A Markov Decision Process (MDP) is a type of probabilistic model widely used within reinforcement learning \citep{sutton2018reinforcement,szepesvari2010algorithms} and control \citep{bertsekas1996neuro}.
In this work, we limit our attention to finite horizon, time-homogeneous MDPs whose action and state spaces are finite.
Formally, an MDP is a quadruplet $(\cS, \cA, \cR, \cP)$, where $\cS$ is a finite, non-empty set of states, $\cA$ is a finite, non-empty set of actions, $\cR \subset \SetR$ is the reward space, and $\cP$ is the transition probability kernel that assigns to each state-action pair $(s,a) \in \cS \times \cA$ a probability measure $\cP(\cdot \cbar s,a)$ over $\cS \times \cR$.
$\cS$ and $\cA$ are known as the \emph{state space} and \emph{action space} respectively.
The transition probability kernel gives rise to the \emph{state transition kernel} $\cSP(s' | s,a) := \cP(\{ s' \} \times \cR \cdbar s,a)$, which gives the probability of transitioning from state $s$ to state $s'$ if action $a$ is taken in $s$.

An agent's behavior is determined by a \emph{policy} that defines, for each state $s\in\cS$ and time $t \in \mathbb{N}$, a probability measure over $\cA$ denoted by $\pi_t(\cdot \cdbar s)$.
A \emph{stationary policy} is a policy which is independent of time, which we will denote by $\pi(\cdot \cdbar s)$ where appropriate. 
At each time $t$, the agent communicates an action $A_{t} \sim \pi_t(\cdot \cdbar S_{t-1})$ to the system in state $S_{t-1} \in \cS$.
The system then responds with a state-reward pair $(S_{t},R_{t}) \sim \cP(\cdot \cdbar S_{t-1}, A_{t})$.
Here we will assume that each reward is bounded between $[r_{\min},r_{\max}] \subset \mathbb{R}$ and that the system starts in a state $s_0$ and executes for an infinite number of steps.
Thus the execution of the system can be described by a sequence of random variables $S_0, A_1, S_{1}, R_{1}, A_2, S_2, R_2, ...$.

The finite $m$-horizon \emph{return} from time $t$ is defined as $Z_{t} := \sum_{i=t}^{t+m-1} R_i$.
The expected $m$-horizon return from time $t$, also known as the \emph{value function}, is denoted by $V^{\pi}_{\mu}(s_t) := \E [Z_{t+1} \cdbar S_t =s_t]$.
The return space $\cZ$ is the set of all possible returns.
The \emph{action-value function} is defined by $Q^{\pi}_{\mu}(s_t, a_{t+1}) := \E [Z_{t+1} \cdbar S_t =s_t, A_{t+1}=a_{t+1}]$.
An \emph{optimal policy}, denoted by $\pi^*_{\mu}$, is a policy that maximizes the expected return $\E \left[ Z_{t+1} \cdbar S_t \right]$ for all $t$.

\section{Information-Theoretic Actuation -- Internal Actions}
\label{sec:internal_actions}

We now describe in detail how to combine the aforementioned building blocks into the internal reinforcement learning framework described in Figure \ref{fig:internal_agent_env_loop}, and discuss its ramifications.
Compared with the standard agent-environment loop, there are two additional components with this setup:
a choice of action model $\rho$, and an associated arithmetic decoder $D_\rho$ that uses $\rho$ as a coding distribution.
The internal action space $\mathbb{B}$ is defined by the associated decoding alphabet used by $D_\rho$; for example, using a binary arithmetic decoder would lead to an internal action space of $\mathbb{B} = \{ 0, 1 \}$.
For pedagogical purposes, we will restrict our attention to this case in the rest of the paper, but remark that any finite decoding alphabet can in principle be used with our construction.

We first introduce our notion of internal action.
At a high-level, one should think of a single internal action as a bit-commitment towards a particular choice of external action, with particular sequences of these corresponding to external actions. 
In a sense, internal actions correspond to a period of private deliberation by the agent, which upon conclusion produces a string describing the desired actuation in compressed form; in essence, the arithmetic decoder functions as a universal actuator, whose behavior can be completely configured by a choice of action model.

\paragraph{Reshaping of the action space.}
As alluded to before, the effect of the action model is to reshape the action space, which the following example will make clear.
Figure \ref{fig:chess1} shows an illustrative example of the behavior of a binary arithmetic decoder equipped with an action model based on a GLN-based context mixing language model \cite{veness2019gated} that has been pre-trained on 9MB of grandmaster games in PGN (Portable Game Notation) format.
On the left hand side of the table, we have the input to the decoder, and on the righthand side we have the decoded output; if we consider the first row, the LHS corresponds to the bitstring $10 = C_\rho(\text{a6})$ and the RHS corresponds to $D_{\rho}(10)$, with $\rho$ here denoting our pre-trained language model.
The LHS of the first 4 rows shows the encoding of a natural sequence of continuing moves (known as the Morphy Defense), while the last four rows show an illogical continuation of moves which ignore development, lose castling rights, and even hang the queen.
One can see that much shorter codes are assigned to the more logical sequence of moves.
This shows the effect of the action model as providing a type of inductive bias, which we will discuss in greater depth later.

In contrast, one could also consider the effect of a completely uninformative action model, $\rho_{\text{\sc uniform}}(a | s) := 1 \, / \, |\cA|$, which assigns uniform probability mass to each possible external action in every state.
Here every single action would have the same codelength of $\lceil \log |\cA| \rceil$, which would correspond to a naive binarization of the external action space.

\paragraph{When to stop decoding.}

\begin{figure}[h!]
\resizebox{.99\linewidth}{!}{
\begin{centering}
\begin{tabular}{ |l|l| } 
\hline
\bf Input bits & \bf Decoded Output \\
\hline
\small{10} & \small{a6}                                                                   \\
\small{10010} & \small{a6 Ba4}  \\
\small{100100} & \small{a6 Ba4 Nf6}   \\
\small{1001010111} & \small{a6 Ba4 Nf6 O-O}   \\
\hline
\small{010010101010011} & \small{Nh6}  \\
\small{0100101010101000110000010} & \small{Nh6 Kf1}  \\
\small{01001010101010001100000110010010101} & \small{Nh6 Kf1 Qg5} \\
\small{01001010101010001100000110010010101010010010001} & \small{Nh6 Kf1 Qg5 Na3} \\
\hline
\end{tabular}
\end{centering}
}
\caption{Arithmetic decoding example. Some example decoded outputs from a pre-trained model on chess, with the model's context set to the Ruy Lopez opening, namely: \emph{e4 e5 Nf3 Nc6 Bb5}. }\label{fig:chess1}
\end{figure}

Figure \ref{fig:chess1} also highlights a technical issue which we need to resolve, namely, how and when is a decoded action to be transmitted to the external environment?
For example, if we wanted a chess-playing agent whose action space was the space of single moves, we need some way to know when our decoded output should be communicated to the environment as an external action.
Although other solutions are possible, in this work we adopt the convention that every external action can be described as a string formed by the concatenation of atomic symbols from a common alphabet.
More formally, we assume that the action space $\cA \subseteq \SetA^{\le k}$, where $\SetA$ denotes the sub-action alphabet, and $k$ is a positive constant.
We assume that the sub-action alphabet always contains a privileged termination symbol $\top \in \SetA$, which has the semantics that when it is decoded it causes an external action to be communicated to the environment.
Note that in finite action/state MDPs, this modification does not impose any restrictions nor add further expressive power.
Returning to the example shown in Figure \ref{fig:chess1}, by identifying the space character with $\top$, we would know when to transmit an external action. 
This is implemented formally via a function $\tau:\SetA^{\le k}\to \cA$ which takes actions and returns the action component up to but not including the first $\top$, for example $\tau(\text{a6}\,\top)=\text{a6}$. 
This importantly handles the case of multiple $\top$ symbols, for example $\tau(\text{a6}\,\top\, \text{Ba4}\,\top)=\text{a6}$.

A terminal symbol is not the only way to know when to stop decoding. Another approach could be to only allow prefix-free codes. This will however run into it's own problems, such as what prefix-free encoding to use, how to enumerate the elements of $\cA$ so that the corresponding prefix code can be found easily (and vice versa). Using an ``optimal'' prefix code would require the use of universal Turing machines and is beyond the scope of this paper.
Another choice to stop decoding is to consider the action before the last $\top$ symbol, instead of before the first. In this case the agent may take multiple actions without knowing the state in between them.

\paragraph{Internal action loop.}
External action selection is determined by executing our internal policy $\pi$ until the concatenation of these binary actions uniquely decodes into an external action. 
Once the action model and arithmetic decoder have generated an external action, this external action will be sent to the external environment. 
The external environment will then return an external observation/reward to the action model and arithmetic decoder combination, and the internal policy receives a reward $r_t$ from the external environment. 
This interaction is displayed graphically in Figure \ref{fig:internal_agent_env_loop} and described procedurally by Algorithm \ref{fig:ag_env_int}.
\begin{algorithm}
\begin{algorithmic}
\caption{Internal Agent-Environment loop} \label{fig:ag_env_int}
\medskip
\REQUIRE Internal policy $\pi  : \cS \times \SetB^{*} \to \Delta \SetB$ 
\REQUIRE External environment $\mu: \cS \times \cA \to \Delta(\cS \times \cR)$
\REQUIRE Action model $\rho: \cS \to \Delta \SetA^{\le k}$
\medskip
\FOR {$t=1,2,3,\ldots$} 
\STATE Observe $s_t,r_t\sim\mu(\cdot,\cdot|s_{t-1},a_{t-1})$ 
\STATE $a_t,q= \epsilon$ 
\WHILE {$\top \strnotin a_t$}
\STATE $b \sim \pi(\cdot |s_t,q)$ 
\STATE $q = qb$ 
\STATE $a_t =D_\rho(q|s_t)$ 
\STATE $r_t=0$
\ENDWHILE
\STATE $a_t=\tau (a_t)$
\STATE Act $a_t$
\ENDFOR
\end{algorithmic}
\end{algorithm}

\paragraph{Example.}
We conclude this section with an example execution in the context of our previous chess example. Here the state space is the finite history of moves.
At the first time-step, the system receives an observation from the external environment. This observation becomes the current state
$ s_0 = \text{e4 e5 Nf3 Nc6 Bb5} $. 
Then given this state the internal policy takes internal binary actions one by one
$ b_0=1,b_1=0$.
The internal policy continues to take internal binary actions until the sequence of internal binary actions uniquely decodes into an external action $a_t$. 
A concatenation of characters is an external action when it has a ``terminal'' symbol $\top$, 
$ \text{a6} = D_\rho(b_0 b_1 |s_0) $.
Now that we know the external action, it is sent to the external environment $\mu$, 
$ a_1 = \text{ a6 } $.
Then we receive the next observation from $\mu$,
\[ o_1 \sim \mu (\cdot | s_0 a_1) = \mu (\cdot | \text{e4 e5 Nf3 Nc6 Bb5 a6}) \]
and now $ s_1 ~:=~ s_0 a_1 o_1$ 
and the process repeats.

\section{Connecting External to Internal Agents and Environments}
\label{sec:internal_external_rl}

In this section, we will describe formally how to augment an arbitrary external environment to an internal environment that the internal action agent is able to interact with; additionally if the external environment is Markovian then the internal environment will also be Markovian. 
Our approach will be to construct an augmented environment $\vartheta$, called the internal environment, comprised of the true environment, the action model and the arithmetic decoder.
We will also show how the internal action agent can be uplifted to an external agent, and then show that both the internal environment with an internal action agent and the external environment with the uplifted policy are equivalent in the sense that they achieve identical action-value functions. 
These results will allow for easier analysis of the internal action agent setup, as well as the ability to apply any result or algorithm specific to MDPs to the internal agent setup.

\paragraph{Internal Environment.}
The internal environment $\vartheta$ is a stochastic function over internal states and internal actions to internal states and rewards. The internal state space used here will be $\cI := \cS \times \SetB^{\le n} $, the state from the external environment and previous internal actions taken by the internal agent, until they are decoded to an external action. 
We consider the finite set $\SetB^{\le n}$ over the infinite set $\SetB^{*}$, as for any external action $a$ with $\rho(a)> 0$ there will always be a finite number of binary actions needed to decode $a$; $n$ is the maximum of those finite numbers.
We will use $\top$ to denote the ``terminal'' symbol, that is, the symbol that indicates when the concatenation of internal actions corresponds to a complete external action, and is sent to the external environment. 
We will use the symbols $s,s'$ for elements of $\cS$, the first component of the internal state. 
We will use $q,q'$ for elements of $\SetB^{\le n}$, the second component of the internal state, the internal agent's previous internal actions.
The symbol $b$ will be used for the internal agent's internal action. The symbol $a$ will be used for a decoded external action, e.g. $D_\rho(qb|s)=a$. 
The true external environment will be denoted by $\mu$, which is a stochastic function from external states and external actions to external states and rewards. The external state space is $\cS$. The external action space is $\cA\subseteq \SetA^{\le k}$.

\begin{defn}[\bf\boldmath MDP (Internal) Environment $\vartheta$]
The internal policy $\pi$ interacts with an internal environment $\vartheta : \cI  \times \SetB \to \Delta (\cI   \times \cR ) $ which is defined by the action model $\rho$ (encoder/decoder $C_\rho$/$D_\rho$ generated by $\rho$) and the true external environment $\mu$ as follows:
\begin{align*}
    & \vartheta (s'q'r|sq,b) := \\
    &\begin{cases}
     \mu (s' r|s, \tau(a) )\!\! &\text{ if }q'=\epsilon\  \land
      (\top\strin a ), \\
     1 &\text{ if } s'=s\ \land
      q'=qb\land  r=0\ \land
      (\top\strnotin a), \\
    0 &\text{ otherwise}
    \end{cases}
\end{align*}
where $a:=D_\rho(qb|s)$, $(s'q',r)\in \cI   \times  \cR  $, $sq\in \cI$ and $b\in \SetB$. \label{def:internal_env}
\end{defn}

The definition of $\vartheta$ is split up into three cases: In the first case the decoded $qb$ contains the symbol $\top$, $\top\strin a$ where $a:=D_\rho(qb|s)$, and the previous binary characters $q'$ resets to being the empty string $\epsilon$. 
In this case the $\tau$ of the decoded action $D_\rho(qb|s)$ is sent to the external environment $\mu$, and the next state $s'$, is the external state $s'$. 
The second case of $\vartheta$ is when the internal agent is still decoding, that is, $\top\strnotin a $ and the next state $s'q'=sqb$ is updated by the agent's action $b$, and the internal reward $r$ is 0. 
In the third case, where neither set of above conditions is satisfied, the probability of the state $s'q'$ and reward $r$ is 0. 
In this way the environment $\vartheta$ is deterministic during the decoding process, and only stochastic when it sends the decoded action to the external environment.

Given the internal agent's policy $\pi$ and the arithmetic decoder $D_{\rho}$, we can construct an external policy $\Pi$ which will interact with the true external environment $\mu$. The external policy $\Pi$ is a stochastic function from external states $s\in \cS$ to external actions $a\in \cA$. 
To construct $\Pi$, we consider all possible binary strings $q\in \SetB^{\le n}$ such that the arithmetic decoder will decode $q$ into $a$ given $s$.
For this we will need to define a \textit{decodable} subset of $\SetB^{\le n}$. We will use $\SetD$ to denote the set of decodable binary strings. A string $q$ is decodable if $\top$ is in the decoding of the string, and $\top$ is not in the decoding of the first $|q|-1$ elements of the string. Formally this means
\[ \SetD_s := \left\{ q\in\SetB^{\le n} :\top\strin D_{\rho}(q|s)\ \land \  \top\strnotin  D_{\rho}(q_{<|q|}|s)  \right\} . \]
We then consider the probability that $\pi$ will output the internal binary actions that eventually construct $q$, which using the chain rule we can write as the product of probabilities that $\pi$ will take the action of each element of $q$ given the previous elements of $q$. All together this is written as follows: 
\begin{equation}
     \Pi(a|s) := \sum_{\substack{q\in \SetD_s : \\  a=\tau(D_{\rho}(q|s))}} \prod_{i=1}^{|q|} \pi(q_i|sq_{<i}). \label{eq:uplifted_pi}
\end{equation}
It is important to note that there may be more than one binary string $q\in \SetD_s$ such that $a=\tau(D_{\rho}(q|s))$; this comes from how arithmetic decoders work. 
For example, consider a case where 
\begin{align*}
    &D_\rho (10|s)=e,\ D_\rho (100|s)=e4,\ D_\rho (101|s)=e4 \\
    &D_\rho (1000|s)=e4\ c5,\ D_\rho (1001|s)=e4\,\top \\
    &D_\rho (1010|s)=e4\,\top,\ D_\rho (1011|s)=e4\ e5
\end{align*}
We have that both $1001$ and $1010$ are elements of $\SetD_s$ and both $\tau(D_{\rho}(1001|s))=e4$ and $\tau(D_{\rho}(1010|s))=e4$, therefore $\Pi(e4|s) $ would be a sum over $1001$ and $1010$.

\paragraph{Self-consistency of internal and external Q-values.}

We can use the external agent $\Pi$ to interact with the external environment $\mu$, just as any regular RL agent would.

\begin{thm}[Internal/External value equivalence]
For all states $s\in\cS$, previous internal actions $q\in\SetB^{\le n}$, external actions $a\in\cA$ and internal actions $b\in\SetB$, if $\tau(D_{\rho}(qb|s))=a$ then
\begin{equation}
   Q_{\mu}^{\Pi} (s,a) = Q_{\vartheta}^{\pi}(sq,b) 
    \label{eq:q_value_equiv}.
\end{equation} 
That is, the action-value function for the external policy $\Pi$ and external environment $\mu$ is equal to the action-value function for the internal policy $\pi$ and the internal environment $\vartheta$. 
\end{thm}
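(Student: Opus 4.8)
The plan is to reduce the claim to a single structural fact about $\vartheta$: starting a fresh decoding round from the internal state $(s,\epsilon)$ and following $\pi$ emits the external action $a$ with probability exactly $\Pi(a\cbar s)$, while every internal step \emph{before} an external action is emitted is deterministic and carries zero reward (the second branch of Definition~\ref{def:internal_env}). Thus all reward and all genuine stochasticity in $\vartheta$ are concentrated at the \emph{completing} steps, where the first branch fires and hands $\tau(a)$ to $\mu$. I would therefore argue that the entire decoding sub-tree between two external transitions can be ``collapsed'', leaving a process over external states governed precisely by $\Pi$ and $\mu$.

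First I would place the two quantities side by side, reading the hypothesis as the completing case $\top\strin D_\rho(qb\cbar s)$ with $\tau(D_\rho(qb\cbar s))=a$, which is exactly the situation the theorem describes (an internal action $b$ that emits the external action $a$). Unrolling one step of $Q^\pi_\vartheta(sq,b)$ through the first branch of $\vartheta$ gives $Q^\pi_\vartheta(sq,b)=\E_{s'r\sim\mu(\cdot\cbar s,a)}\!\left[r+V^\pi_\vartheta(s',\epsilon)\right]$, since that branch sets the successor internal state to $(s',\epsilon)$ and draws $(s',r)$ from $\mu(\cdot\cbar s,a)$; the ordinary Bellman expansion on the external side gives $Q^\Pi_\mu(s,a)=\E_{s'r\sim\mu(\cdot\cbar s,a)}\!\left[r+V^\Pi_\mu(s')\right]$. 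Both expectations are over the \emph{same} kernel $\mu(\cdot\cbar s,a)$, so the theorem reduces to the single identity $V^\pi_\vartheta(s',\epsilon)=V^\Pi_\mu(s')$ for every external state $s'$.

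The core step is to prove $V^\pi_\vartheta(s,\epsilon)=V^\Pi_\mu(s)$ by induction on the horizon $m$, counted in \emph{external} transitions, the zero-reward decoding steps not advancing this count. For the inductive step I would expand $V^\pi_\vartheta(s,\epsilon)$ by repeatedly applying the second (deterministic, zero-reward) branch of $\vartheta$ down the decoding tree: any prefix $q'$ with $\top\strnotin D_\rho(q'\cbar s)$ satisfies $V^\pi_\vartheta(s,q')=\sum_{b'}\pi(b'\cbar s,q')\,Q^\pi_\vartheta(sq',b')$, and each $Q^\pi_\vartheta(sq',b')$ is either another such value (recurse) or, once $\top$ appears, a completing leaf. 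Telescoping the chain rule along each root-to-leaf path collects exactly the products $\prod_{i=1}^{|q|}\pi(q_i\cbar sq_{<i})$ over the decodable strings $q\in\SetD_s$, with each leaf contributing the completing value $Q^\pi_\vartheta(sq_{<|q|},q_{|q|})$. Grouping the leaves by their decoded action $a=\tau(D_\rho(q\cbar s))$ and invoking the definition of $\SetD_s$ together with Equation~\eqref{eq:uplifted_pi} turns the accumulated leaf-weights into $\Pi(a\cbar s)$; by the inductive hypothesis each completing $Q^\pi_\vartheta$ equals the corresponding $Q^\Pi_\mu(s,a)$ at horizon $m-1$. Hence $V^\pi_\vartheta(s,\epsilon)=\sum_a\Pi(a\cbar s)\,Q^\Pi_\mu(s,a)=V^\Pi_\mu(s)$, which closes the induction and, with the reduction above, proves the theorem.

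I expect the main obstacle to be the bookkeeping that makes the tree collapse valid: one must check that every decoding path terminates (finiteness of $\SetB^{\le n}$ and the definition of $\SetD_s$ guarantee this), that the chain-rule products along distinct paths assemble into \emph{exactly} the sum defining $\Pi$ — including the genuine multiplicity, stressed just before the statement, of several $q\in\SetD_s$ decoding to the same $a$ — and, most delicately, that the horizon is accounted in external transitions so that the zero-reward interior steps neither inflate nor truncate the return. Getting this alignment right is what makes the otherwise-routine Bellman unrolling legitimate.
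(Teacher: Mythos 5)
Your proposal is correct and takes essentially the same route as the paper: both arguments rest on the identical key identity --- one external interaction $\mu(s'r\cbar s,a)\,\Pi(a'\cbar s')$ equals the collapsed decoding tree of deterministic, zero-reward $\vartheta$-steps (second branch of Definition~\ref{def:internal_env}) whose path products of $\pi$-probabilities over $q'\in\SetD_{s'}$ reassemble into $\Pi$ via Equation~\ref{eq:uplifted_pi}, with the completing branch matching $\mu$ --- and both hinge on the same delicate point you flag, namely that the horizon is counted in external transitions, exactly as the paper remarks at the end of its proof. The only difference is packaging: you wrap this identity in an induction on the external horizon through the intermediate claim $V^{\pi}_{\vartheta}(s,\epsilon)=V^{\Pi}_{\mu}(s)$, whereas the paper substitutes the identity factor-by-factor into the fully expanded $m$-step nested sum; these are the same computation.
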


\begin{proof}
This proof comes from expanding the action-value function using Equation \ref{eq:uplifted_pi} and Definition \ref{def:internal_env} to rearrange the expanded action-value function. For the full proof see the supplementary material.
\end{proof}

Because of Equation \ref{eq:q_value_equiv} we are able to say that if an internal agent $\pi$ performs well, in the sense of a high action-value, in the internal environment $\vartheta$, then the uplifted version of the agent $\Pi$, performs well in the true external environment $\mu$.

\section{A Universal Action Interface \\for Multi-task RL}
\label{sec:universal_action_interface}

A key complication and limiting factor in the design of any multi-task RL system is how to deal with the potentially radically different action spaces required for each distinct task.
While it is feasible to make a generic agent work well across multiple similar domains e.g. Atari games \cite{mnih2015humanlevel}, the situation becomes considerably more complicated when the action spaces of the different tasks vary dramatically.
The arithmetic encoding-based approach we advocate provides an elegant solution to this problem, which builds on techniques from universal source coding.

Given $K > 1$ coding distributions, it is straightforward to combine them into a universal ensemble whose compression performance will be close to that of the best coding distribution in hindsight.
If we denote the $i$th coding distribution by $\rho_i$, one can take a uniform Bayesian mixture of the $K$ coding distributions, whose marginal distribution over sequences is given by
\begin{equation}
\xi(x_{1:n}) := \sum_{i=1}^K \frac{1}{K} \; \rho_i(x_{1:n}).
\end{equation}
A standard dominance argument shows that the logarithmic loss/coding length of the mixture $\xi$ compared to any choice of action model $j$ is bounded by
\begin{align*}
-\log \xi(x_{1:n}) &\leq -\log \left( \frac{1}{K} \; \rho_j(x_{1:n}) \right) \\
&= -\log \rho_j(x_{1:n}) + \log K,
\end{align*}
or in other words, the excess log-loss is bounded by a constant, which is asymptotically negligible when one considers the time-averaged performance of the ensemble.

This has important ramifications for multi-task reinforcement learning in our internal action formulation.
Recently, various works \cite{janner2021reinforcement} have attempted to frame reinforcement learning in terms of probabilistic sequence models over interaction strings, i.e. defining a sequential probability measure $\nu$ over strings that represent state/reward/action histories in the form $s_1 r_1 a_1 \dots$.
By taking a uniform Bayesian mixture over multiple instances of these history-based measures for different tasks, just as in the coding distribution example, one also obtains a sequence model that is universal across all of these tasks.
More formally, given a history string $h$ which is an element of $(\cS \times \cR \times \cA)^* \cup \left( (\cS \times \cR \times \cA)^* \times (\cS \times \cR)\right)$, we can define the uniform Bayesian mixture
\begin{equation}\label{eq:history_mix}
\xi(h) = \sum_{i=1}^K \frac{1}{K} \, \nu_i(h)
\end{equation}
over $K$ history based measures $\nu_i$, with each $\nu_i$ corresponding to a task specific history model.
Note that this formulation in terms of measures on strings still implies the usual Bayesian learning in terms of sequential updating of the posterior, it is just hidden in this notation; see Section 2 by \citet{veness2016} for a brief overview.

An interesting effect now emerges if we use the conditional action distribution $\xi(\cdot | s_1 r_1 a_1, \dots ,s_n r_n)$ as the action model in our setup.
In particular, this action model will rapidly learn to \emph{automatically generate actions appropriate for the underlying task}, without requiring any task identity information.
How this works is subtle; Bayesian inference is used implicitly by $\xi$ to determine which task the agent is most likely in, and due to the rapid convergence of the Bayesian mixture to the best task specific model, the action model used for decoding after a small number of external environment interactions will essentially behave the same as if we knew which task specific action model to use in the first place.
In other words, what this means in practice is that one can use $\xi$ as the action model, and $C_\xi$ will produce codes which are almost as short as any task-specific action encoding $C_{\rho_j}$.
In particular, this implies that short bitstrings can decode to very different external actions which are plausible under either task-specific model.

The most interesting aspect about this construction is that the internal action formalism allows us to treat a multi-task reinforcement problem as a single reinforcement learning task with a common action space.

\section{Discussion}\label{sec:discussion}

This section discusses some interesting and potentially surprising ramifications of information-theoretic internal actions.

\paragraph{Uninformative internal policy $\pi$ and application to Large Language Models.}
An interesting corollary of the internal reinforcement learning setup is that a uniform policy over internal actions gives rise to an external agent that selects actions that are essentially distributed to the action model.
This is a by-product of the duality between optimal codes and probabilities, and can be seen with the following argument.
In the case of a binary internal action space, a uniform policy will generate a particular sequence $b_{1:n}$ with probability $2^{-n}$.
Now, notice that the probability of an action $a \in \cA$ in state $s$ where $C_{\rho}(a | s) = b_{1:n}$ is given by
\begin{equation*}
\rho(a | s) = 2^{\log_2 \rho(a | s)} \approx 2^{-|C_{\rho}(a | s)|} = 2^{-n}.
\end{equation*}
The approximate equality step is due to the small gap between the optimal code length $-\log_2 \rho(a | s)$ and the realised code length $C_{\rho}(a | s)$ produced by an arithmetic encoder coupled to $\rho$.
The size of this gap is bounded by 1, but is essentially negligible for the purposes of this argument. 

This has interesting ramifications for constructing agents when the action model is already useful, such as in the case of Large Language Models (LLMs) obtained by supervised learning on massive amounts of internet data.
Random behaviour by the internal policy in this case will still produce useful behavior, which provides a natural starting point for any internal policy learning technique. 
In particular random internal-action exploration becomes targeted, possibly leading to optimal external-action exploration, in a similar fashion to Thompson Sampling.
It is in this way that the action model provides a powerful mechanism for specifying data-dependent prior knowledge to existing reinforcement learning algorithms.

\paragraph{Specifying the action space from data.}
In complicated environments, it may be difficult or complicated to precisely specify the action space explicitly.
This situation readily arises in natural language domains for example.
In these cases it is more natural to simply learn a probabilistic model of the domain.
Our internal agent formalism directly allows for this possibility via the action model.
The action model allows for a strict separation between pre-training on data, for example pre-training an action model using a collection of grandmaster games in chess, and the resultant learning behavior of the internal agent. 

It is also worth pointing out an interesting connection to meta-learning with sequence models across many tasks.
Perhaps surprisingly, perplexity-based meta-learning of history-dependent LLMs is closely related to the explicit Bayesian mixture solution described in Equation \ref{eq:history_mix}.
In particular, one can show that in many standard meta-learning setups, the optimal perplexity-minimizing solution is \emph{exactly} a Bayesian mixture distribution \cite{ortega2019metalearning}. 
Provided that a sufficiently powerful history-dependent model is used (such as the case with LLMs based on Transformers) to model the interaction histories, a low-perplexity solution can be seen as a learnt approximation to the explicit Bayesian construction we provided in in Equation \ref{eq:history_mix}. 
In this way the action space for a multi-task agent can be learnt directly from data alone, which goes some way to explaining the recent empirical success of approaches such as \citet{janner2021reinforcement}.
In other words, if one wanted an agent that could play both chess and something with a radically different action space such as the text-based NetHack \cite{kuttler2020nethack}, a natural strategy would be to pre-train using meta-learning a sequence model from example trajectories in both games, and use this to define the action model; the internal action framework will then automatically deal with the different underlying action spaces.

\paragraph{Comparison to binarization.}
It is instructive to consider the differences between a direct binarization of the action space compared with our approach.
One can interpret the combination of an action model and a binary arithmetic decoder as a generalized form of binarization.
As discussed earlier, an action model which assigned a uniform distribution over the action space in every state is equivalent to a naive binarization, with every action being assigned a code-length of $\lceil \log_2 |\cA| \rceil$.
Binarization of actions in reinforcement learning has the obvious benefit of reducing the size of the action space, which can lead to some benefits \cite{majeed2020exact}. However, often this binarization comes with a corresponding increase to the planning horizon, and in many circumstances provides no benefits.

Our non-uniform binarization essentially reshapes the action space according to the knowledge contained within the action model.
Thus planning using various types of depth limited search takes on a different meaning in our internal reinforcement learning setting.
Although the planning algorithm may only be searching $d$ steps ahead, the implied information-theoretic planning horizon might be much greater than $d$.

\paragraph{Computational advantages.}
Many reinforcement learning techniques require an ability to efficiently generate a random sample from the action space.
A convenient property of our formalism is that it provides a generic technique to generate samples from arbitrary action models/action spaces.
This is a byproduct of having an arithmetic decoder coupled to an action model.
By generating a sequence of bits $y_{1:m}$ with each bit sampled from a $\text{Bernoulli}(1/2)$ distribution, 
and feeding them to a binary arithmetic decoder $D_\rho$ coupled to the action model $\rho$, 
one can show that external action $a:=D_{\rho}(y_{1:m})$ is distributed according to $\rho$ \cite{MacKay2003},
which resembles Thompson sampling.
We can also efficiently compute the probability of $a$ as a product of conditional probabilities ($P[a]=\prod_{t=1}^k P[b_t|b_{<t}]$) required for some learning algorthms.
We can even efficiently compute the cumulative probability based on the recursion $P(X_{1:k}\leq b_{1:k})=[\![b_1=1]\!]P(X_1=0) + P(X_{2:k}\leq b_{2:k}|X_1=b_1)P(X_1=b_1)$.
Unfortunately binarization does \emph{not} lend itself to an efficient way of computing the most probable (MAP) action $\arg\max_a P(a)$.
But Thompson sampling for large spaces often performs better than MAP anyway, since the latter is not representation invariant and favors brittle solutions.
Binarization decreases the branching factor in planning algorithms but increases the planning horizon. Since binarized actions are length-optimized this \emph{may} still lead to a net win.
For example, depth-limited planning techniques, which typically have an exponential dependence on the length of the horizon, now have an exponential dependence on the combined code-length under $\rho$, which drastically alters their semantics and is closely connected to using a prior policy to guide search such as in successful approaches for Computer Go \cite{silver2017mastering,orseau2018single,orseau2021policy}.

\paragraph{Pre-training and universality.}
A common use case in machine learning is to consider fine tuning an existing pre-trained model to save on compute.
The next result shows that pre-training on any data will not affect the asymptotic performance of any consistent density estimator.
In our context, it suggests that a good general approach to constructing an action model for a new domain might be to first pre-train on large, task-agnostic data and then to use fine tuning to incorporate task-specific knowledge if this data is available.

More formally, consider sequences $X_{1:\infty}$ over a finite alphabet $\cX$ sampled from $P_{\theta_0}$.
Assume $\theta(X_{1:n})$ is a consistent estimator of $\theta_{0}$.
Then whatever the first $k$ samples $x_{1:k}$ are,
$\theta(x_{1:k}X_{k+1:n})$ is still a consistent estimator of $\theta_0$. Additionally, the reverse is also true. 
Most importantly, this holds without \emph{any} assumptions on the stochastic process $(X_t)\sim P_{\theta_0}$.
\begin{propapdx}[\bf consistency is immortal]
\label{thm:consis_imm_iff}
For any fixed $k\in\mathbb{N}$, $\theta(X_{1:n})$ is a consistent estimator of $\theta_0$ if and only if for all $x_{1:k}$ such that $P_{\theta_0}[x_{1:k}]>0$, $\theta(x_{1:k}X_{k+1:n})$ is a consistent estimator of $\theta_0$. 
\end{propapdx}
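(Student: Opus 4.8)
The plan is to read "consistency" as the convergence $\theta(X_{1:n}) \to \theta_0$ under $P_{\theta_0}$ (either in $P_{\theta_0}$-probability or $P_{\theta_0}$-almost surely; the argument is structurally identical for both) and to reduce the entire statement to the law of total probability over the finitely many length-$k$ prefixes. The single observation that drives everything is that, on the event $\{X_{1:k} = x_{1:k}\}$, the random string fed to the estimator literally begins with $x_{1:k}$, so $\theta(X_{1:n})$ and $\theta(x_{1:k}X_{k+1:n})$ are the \emph{same} random variable there; moreover, under $P_{\theta_0}$ the conditional law of $X_{k+1:\infty}$ given $X_{1:k}=x_{1:k}$ is exactly the law governing the estimator $\theta(x_{1:k}X_{k+1:n})$ in the statement. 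Because $\cX$ is finite there are only finitely many prefixes $x_{1:k}$, and they partition the sample space with $\sum_{x_{1:k}} P_{\theta_0}[x_{1:k}] = 1$.

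For the forward direction, I would fix $\eps > 0$ and write, via total probability and the coincidence of the two estimators on each conditioning event,
\begin{align*}
P_{\theta_0}\bigl[\,|\theta(X_{1:n}) - \theta_0| > \eps\,\bigr]
&= \sum_{x_{1:k}} P_{\theta_0}[x_{1:k}] \\
&\quad \times P_{\theta_0}\bigl[\,|\theta(x_{1:k}X_{k+1:n}) - \theta_0| > \eps \bigm| X_{1:k} = x_{1:k}\,\bigr].
\end{align*}
The left-hand side tends to $0$ by assumption, and the right-hand side is a \emph{finite} sum of non-negative terms; hence each term tends to $0$, and dividing out the fixed positive constant $P_{\theta_0}[x_{1:k}]$ for every prefix with $P_{\theta_0}[x_{1:k}] > 0$ gives exactly the consistency of $\theta(x_{1:k}X_{k+1:n})$ under the conditional law. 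In the almost-sure reading one instead notes that $P_{\theta_0}[A] = 1$ for the convergence event $A$ forces $P_{\theta_0}[A \mid X_{1:k}=x_{1:k}] = 1$ for each positive-probability prefix, by the same observation that non-negative terms weighted by the $P_{\theta_0}[x_{1:k}]$ can sum to $1$ only if each surviving conditional equals $1$.

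The reverse direction runs the identity backwards: assuming each conditional deviation probability tends to $0$ for every prefix with $P_{\theta_0}[x_{1:k}] > 0$ (prefixes of probability $0$ contribute nothing to the sum), the right-hand side above is a finite sum in which every surviving term tends to $0$, so the left-hand side tends to $0$ and $\theta(X_{1:n})$ is consistent. The crucial enabling fact in both directions is the finiteness of $\cX$: it makes the prefix decomposition a finite sum, so that "total tends to $0$" and "each summand tends to $0$" are equivalent with no interchange-of-limits issue, and the reassembly in the reverse direction is immediate.

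I expect the only genuine subtlety — and the step I would state most carefully — to be the definitional one: making explicit that $\theta(x_{1:k}X_{k+1:n})$ is assessed under the conditional measure $P_{\theta_0}[\,\cdot \mid X_{1:k}=x_{1:k}]$, so that the law-of-total-probability split is an exact identity rather than an approximation. Everything else is bookkeeping. I would also highlight that no property of the process $(X_t)$ — stationarity, independence, or Markovianity — is ever invoked, which is precisely the claim that the result holds without \emph{any} assumptions on $P_{\theta_0}$, since the argument only manipulates the finitely many prefix probabilities and their associated conditional laws.
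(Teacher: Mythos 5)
Your proposal is correct and follows essentially the same route as the paper's own proof: a law-of-total-probability decomposition over the finitely many length-$k$ prefixes, using positivity of $P_{\theta_0}[x_{1:k}]$ to pass between the unconditional convergence statement and the conditional one (the paper phrases this via the null set $\cN$ of failure sequences in the almost-sure formulation, which matches your almost-sure reading). The only difference is cosmetic: you additionally spell out the in-probability variant and the identification of $\theta(x_{1:k}X_{k+1:n})$ with the conditional law, both of which the paper leaves implicit.
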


One consequence of this proposition is that for a given $\rho$ and $\pi$, if $\Pi$ defined in Equation \ref{eq:uplifted_pi} is a consistent estimator of $\pi_{\mu}^*$, the optimal policy in $\mu$, then if the action model $\rho$ is pre-trained on additional data then $\Pi$ is still a consistent estimator.

\paragraph{Discounting.}
One subtlety that arises is how to best communicate a reward from the environment to the internal policy.
Notice that in Algorithm \ref{fig:ag_env_int}, after the first internal action, $r_t$ is set to 0.
This has the effect of preserving the return and associated discounting schedule in the external environment.
Although in the case of uninformative action binarization one can map a discounted external setup to an equivalent discounted internal setup \cite{majeed2020exact}, attempting a more general construction along those lines in our case requires time-dependent, and worse, history-dependent discounting which runs into both technical and computational challenges.

\section{Related Work}
\label{sec:related_work}

\vspace{0.5em}
We now discuss some related work.

\paragraph{Compression-based RL.}
Using compression to aid with machine learning goes back to at least the work by \citet{frank2000text}, where compression-based models were compared to classical machine learning methods on a number of natural language problems. More recently, \citet{hamilton2013modelling} used compression with Predictive State Representation on domains with large observation spaces to aid with the intractability. \citet{botvinick2015reinforcement} discussed the internal representation of reinforcement learning, specifically a natural or efficient coding of the internal representation. \citet{veness2015compress} used compression-based techniques for policy evaluation via action-value estimation.

\paragraph{Large transformer/language models used to aid RL.}
Language models have had a recent resurgence, starting with \textit{Attention is all you need} \cite{vaswani2017attention} and being followed by the success of GPT-2 \cite{radford2019language} and GPT-3 \cite{brown2020language}. Despite the accomplishment of language models on NLP tasks, there have only been a few circumstances where these techniques have translated to the field of reinforcement learning. \citet{luketina2019survey} provides a survey of reinforcement learning methods which have been improved with the addition of natural language approaches. One such example is \citet{kaplan2017beating}, where natural language methods were used with deep reinforcement learning to play Atari games. In addition to this, recent work on using transformer models with reinforcement learning includes: \citet{parisotto2020stabilizing}, \citet{noever2020chess} train GPT-2 \cite{radford2019language} on the PGN format to learn chess, 
\citet{ciolino2020go} trained GPT-2 in a similar way to learn Go, and \citet{stein2020stabilizing} used Transformers for Deep Q-learning to play Atari games. \citet{krause2020gedi} introduced a coding scheme to improve small language models.

\section{Future Work}
\label{sec:future_work}

Arithmetic encoding has a number of extensions which deserve further investigation in the context of reinforcement learning. 
In particular, one can generalise arithmetic encoding to time-adaptive coding distributions, which is known as adaptive arithmetic encoding. 
While one could crudely incorporate this notion into our existing work, a more complete treatment would require going outside the MDP formalism.

Finally from a theory perspective, there are some additional generalizations that can be made, though they are beyond the scope of this paper. 
These include a more thorough treatment of the infinite horizon case by using discounting, investigating setups which do not require an end of action symbol $\top$ (as discussed earlier) and allowing the external action space to be countably infinite.

\section{Conclusion}

In this work we have laid the conceptual foundations for information-theoretic actuation.
We revisited the meaning of action in reinforcement learning, and explored a particular type of internal viewpoint.
We have demonstrated how our method is theoretically well justified, by formally connecting it to a traditional reinforcement learning MDP setup.
We argued that such a framework is well positioned to take advantage of the recent progress in large sequence/language models for multitask reinforcement learning problems over large action spaces.
The next step is to explore application of this formalism in conjunction with modern sequence modelling techniques on some benchmark problems to better understand the potentials and limitations of this approach.
Multi-task RL problems with vastly different action spaces seem the most natural setting where our approach could have immediate impact.

\begin{small}
\appendix
\end{small}

\section{Acknowledgements}
We thank Csaba Szepesvari, Nando de Freitas, Oriol Vinyals, and Pedro Ortega for some helpful discussions.
This work has been supported in parts by the Australian Research Council under grant DP150104590.

\bibliography{cac.bib}

\begin{thebibliography}{33}
\providecommand{\natexlab}[1]{#1}

\bibitem[{Bertsekas and Tsitsiklis(1996)}]{bertsekas1996neuro}
Bertsekas, D.~P.; and Tsitsiklis, J.~N. 1996.
\newblock \emph{Neuro-dynamic programming}.
\newblock Athena Scientific.

\bibitem[{Botvinick et~al.(2015)Botvinick, Weinstein, Solway, and
  Barto}]{botvinick2015reinforcement}
Botvinick, M.; Weinstein, A.; Solway, A.; and Barto, A. 2015.
\newblock Reinforcement learning, efficient coding, and the statistics of
  natural tasks.
\newblock \emph{Current opinion in behavioral sciences}, 5: 71--77.

\bibitem[{Brown et~al.(2020)Brown, Mann, Ryder, Subbiah, Kaplan, Dhariwal,
  Neelakantan, Shyam, Sastry, Askell et~al.}]{brown2020language}
Brown, T.~B.; Mann, B.; Ryder, N.; Subbiah, M.; Kaplan, J.; Dhariwal, P.;
  Neelakantan, A.; Shyam, P.; Sastry, G.; Askell, A.; et~al. 2020.
\newblock Language models are few-shot learners.
\newblock \emph{arXiv preprint arXiv:2005.14165}.

\bibitem[{Ciolino, Kalin, and Noever(2020)}]{ciolino2020go}
Ciolino, M.; Kalin, J.; and Noever, D. 2020.
\newblock The Go Transformer: Natural Language Modeling for Game Play.
\newblock In \emph{2020 Third International Conference on Artificial
  Intelligence for Industries (AI4I)}, 23--26. IEEE.

\bibitem[{Cover(1999)}]{cover1999elements}
Cover, T.~M. 1999.
\newblock \emph{Elements of information theory}.
\newblock John Wiley \& Sons.

\bibitem[{Ecoffet et~al.(2021)Ecoffet, Huizinga, Lehman, Stanley, and
  Clune}]{ecoffet2021goexplore}
Ecoffet, A.; Huizinga, J.; Lehman, J.; Stanley, K.~O.; and Clune, J. 2021.
\newblock Go-Explore: a New Approach for Hard-Exploration Problems.
\newblock arXiv:1901.10995.

\bibitem[{Frank, Chui, and Witten(2000)}]{frank2000text}
Frank, E.; Chui, C.; and Witten, I.~H. 2000.
\newblock Text categorization using compression models.

\bibitem[{Hamilton, Fard, and Pineau(2013)}]{hamilton2013modelling}
Hamilton, W.~L.; Fard, M.~M.; and Pineau, J. 2013.
\newblock Modelling sparse dynamical systems with compressed predictive state
  representations.
\newblock In \emph{International Conference on Machine Learning}, 178--186.
  PMLR.

\bibitem[{Janner, Li, and Levine(2021)}]{janner2021reinforcement}
Janner, M.; Li, Q.; and Levine, S. 2021.
\newblock Reinforcement Learning as One Big Sequence Modeling Problem.
\newblock arXiv:2106.02039.

\bibitem[{Kaplan, Sauer, and Sosa(2017)}]{kaplan2017beating}
Kaplan, R.; Sauer, C.; and Sosa, A. 2017.
\newblock Beating atari with natural language guided reinforcement learning.
\newblock \emph{arXiv preprint arXiv:1704.05539}.

\bibitem[{Krause et~al.(2020)Krause, Gotmare, McCann, Keskar, Joty, Socher, and
  Rajani}]{krause2020gedi}
Krause, B.; Gotmare, A.~D.; McCann, B.; Keskar, N.~S.; Joty, S.; Socher, R.;
  and Rajani, N.~F. 2020.
\newblock Gedi: Generative discriminator guided sequence generation.
\newblock \emph{arXiv preprint arXiv:2009.06367}.

\bibitem[{K{\"u}ttler et~al.(2020)K{\"u}ttler, Nardelli, Miller, Raileanu,
  Selvatici, Grefenstette, and Rockt{\"a}schel}]{kuttler2020nethack}
K{\"u}ttler, H.; Nardelli, N.; Miller, A.~H.; Raileanu, R.; Selvatici, M.;
  Grefenstette, E.; and Rockt{\"a}schel, T. 2020.
\newblock The nethack learning environment.
\newblock \emph{arXiv preprint arXiv:2006.13760}.

\bibitem[{Luketina et~al.(2019)Luketina, Nardelli, Farquhar, Foerster, Andreas,
  Grefenstette, Whiteson, and Rockt{\"a}schel}]{luketina2019survey}
Luketina, J.; Nardelli, N.; Farquhar, G.; Foerster, J.; Andreas, J.;
  Grefenstette, E.; Whiteson, S.; and Rockt{\"a}schel, T. 2019.
\newblock A survey of reinforcement learning informed by natural language.
\newblock \emph{arXiv preprint arXiv:1906.03926}.

\bibitem[{MacKay(2003)}]{MacKay2003}
MacKay, D. J.~C. 2003.
\newblock \emph{Information Theory, Inference, and Learning Algorithms}.
\newblock Copyright Cambridge University Press.

\bibitem[{Majeed and Hutter(2020)}]{majeed2020exact}
Majeed, S.~J.; and Hutter, M. 2020.
\newblock Exact Reduction of Huge Action Spaces in General Reinforcement
  Learning.
\newblock \emph{arXiv preprint arXiv:2012.10200}.

\bibitem[{Milan et~al.(2016)Milan, Veness, Kirkpatrick, Bowling, Koop, and
  Hassabis}]{veness2016}
Milan, K.; Veness, J.; Kirkpatrick, J.; Bowling, M.; Koop, A.; and Hassabis, D.
  2016.
\newblock The Forget-me-not Process.
\newblock In Lee, D.; Sugiyama, M.; Luxburg, U.; Guyon, I.; and Garnett, R.,
  eds., \emph{Advances in Neural Information Processing Systems}, volume~29.
  Curran Associates, Inc.

\bibitem[{Mnih et~al.(2015)Mnih, Kavukcuoglu, Silver, Rusu, Veness, Bellemare,
  Graves, Riedmiller, Fidjeland, Ostrovski, Petersen, Beattie, Sadik,
  Antonoglou, King, Kumaran, Wierstra, Legg, and Hassabis}]{mnih2015humanlevel}
Mnih, V.; Kavukcuoglu, K.; Silver, D.; Rusu, A.~A.; Veness, J.; Bellemare,
  M.~G.; Graves, A.; Riedmiller, M.; Fidjeland, A.~K.; Ostrovski, G.; Petersen,
  S.; Beattie, C.; Sadik, A.; Antonoglou, I.; King, H.; Kumaran, D.; Wierstra,
  D.; Legg, S.; and Hassabis, D. 2015.
\newblock Human-level control through deep reinforcement learning.
\newblock \emph{Nature}, 518(7540): 529--533.

\bibitem[{Noever, Ciolino, and Kalin(2020)}]{noever2020chess}
Noever, D.; Ciolino, M.; and Kalin, J. 2020.
\newblock The Chess Transformer: Mastering Play using Generative Language
  Models.
\newblock \emph{arXiv preprint arXiv:2008.04057}.

\bibitem[{Orseau and Lelis(2021)}]{orseau2021policy}
Orseau, L.; and Lelis, L.~H. 2021.
\newblock Policy-Guided Heuristic Search with Guarantees.
\newblock In \emph{Proceedings of the AAAI Conference on Artificial
  Intelligence}, volume~35, 12382--12390.

\bibitem[{Orseau et~al.(2018)Orseau, Lelis, Lattimore, and
  Weber}]{orseau2018single}
Orseau, L.; Lelis, L.~H.; Lattimore, T.; and Weber, T. 2018.
\newblock Single-agent policy tree search with guarantees.
\newblock In \emph{Proceedings of the 32nd International Conference on Neural
  Information Processing Systems}, 3205--3215.

\bibitem[{Ortega et~al.(2019)Ortega, Wang, Rowland, Genewein, Kurth-Nelson,
  Pascanu, Heess, Veness, Pritzel, Sprechmann, Jayakumar, McGrath, Miller,
  Azar, Osband, Rabinowitz, György, Chiappa, Osindero, Teh, van Hasselt,
  de~Freitas, Botvinick, and Legg}]{ortega2019metalearning}
Ortega, P.~A.; Wang, J.~X.; Rowland, M.; Genewein, T.; Kurth-Nelson, Z.;
  Pascanu, R.; Heess, N.; Veness, J.; Pritzel, A.; Sprechmann, P.; Jayakumar,
  S.~M.; McGrath, T.; Miller, K.; Azar, M.; Osband, I.; Rabinowitz, N.;
  György, A.; Chiappa, S.; Osindero, S.; Teh, Y.~W.; van Hasselt, H.;
  de~Freitas, N.; Botvinick, M.; and Legg, S. 2019.
\newblock Meta-learning of Sequential Strategies.
\newblock arXiv:1905.03030.

\bibitem[{Parisotto et~al.(2020)Parisotto, Song, Rae, Pascanu, Gulcehre,
  Jayakumar, Jaderberg, Kaufman, Clark, Noury
  et~al.}]{parisotto2020stabilizing}
Parisotto, E.; Song, F.; Rae, J.; Pascanu, R.; Gulcehre, C.; Jayakumar, S.;
  Jaderberg, M.; Kaufman, R.~L.; Clark, A.; Noury, S.; et~al. 2020.
\newblock Stabilizing transformers for reinforcement learning.
\newblock In \emph{International Conference on Machine Learning}, 7487--7498.
  PMLR.

\bibitem[{Radford et~al.(2019)Radford, Wu, Child, Luan, Amodei, and
  Sutskever}]{radford2019language}
Radford, A.; Wu, J.; Child, R.; Luan, D.; Amodei, D.; and Sutskever, I. 2019.
\newblock Language models are unsupervised multitask learners.
\newblock \emph{OpenAI blog}, 1(8): 9.

\bibitem[{Rissanen and Langdon(1979)}]{rissanen1979arithmetic}
Rissanen, J.; and Langdon, G.~G. 1979.
\newblock Arithmetic coding.
\newblock \emph{IBM Journal of research and development}, 23(2): 149--162.

\bibitem[{Silver et~al.(2017)Silver, Hubert, Schrittwieser, Antonoglou, Lai,
  Guez, Lanctot, Sifre, Kumaran, Graepel et~al.}]{silver2017mastering}
Silver, D.; Hubert, T.; Schrittwieser, J.; Antonoglou, I.; Lai, M.; Guez, A.;
  Lanctot, M.; Sifre, L.; Kumaran, D.; Graepel, T.; et~al. 2017.
\newblock Mastering chess and shogi by self-play with a general reinforcement
  learning algorithm.
\newblock \emph{arXiv preprint arXiv:1712.01815}.

\bibitem[{Stein, Filchenkov, and Asadulaev(2020)}]{stein2020stabilizing}
Stein, G.; Filchenkov, A.; and Asadulaev, A. 2020.
\newblock Stabilizing Transformer-Based Action Sequence Generation For
  Q-Learning.
\newblock \emph{arXiv preprint arXiv:2010.12698}.

\bibitem[{Strehl, Li, and Littman(2009)}]{strehl2009}
Strehl, A.; Li, L.; and Littman, M. 2009.
\newblock Reinforcement learning in finite MDPs: PAC analysis.
\newblock \emph{Journal of Machine Learning Research}, 10: 2413--2444.

\bibitem[{Sutton and Barto(2018)}]{sutton2018reinforcement}
Sutton, R.~S.; and Barto, A.~G. 2018.
\newblock \emph{Reinforcement learning: An introduction}.
\newblock MIT press.

\bibitem[{Szepesv{\'a}ri(2010)}]{szepesvari2010algorithms}
Szepesv{\'a}ri, C. 2010.
\newblock Algorithms for reinforcement learning.
\newblock \emph{Synthesis lectures on artificial intelligence and machine
  learning}, 4(1): 1--103.

\bibitem[{Vaswani et~al.(2017)Vaswani, Shazeer, Parmar, Uszkoreit, Jones,
  Gomez, Kaiser, and Polosukhin}]{vaswani2017attention}
Vaswani, A.; Shazeer, N.; Parmar, N.; Uszkoreit, J.; Jones, L.; Gomez, A.~N.;
  Kaiser, L.; and Polosukhin, I. 2017.
\newblock Attention is all you need.
\newblock \emph{arXiv preprint arXiv:1706.03762}.

\bibitem[{Veness et~al.(2015)Veness, Bellemare, Hutter, Chua, and
  Desjardins}]{veness2015compress}
Veness, J.; Bellemare, M.; Hutter, M.; Chua, A.; and Desjardins, G. 2015.
\newblock Compress and control.
\newblock In \emph{Proceedings of the AAAI Conference on Artificial
  Intelligence}, volume~29.

\bibitem[{Veness et~al.(2019)Veness, Lattimore, Budden, Bhoopchand, Mattern,
  Grabska-Barwinska, Sezener, Wang, Toth, Schmitt et~al.}]{veness2019gated}
Veness, J.; Lattimore, T.; Budden, D.; Bhoopchand, A.; Mattern, C.;
  Grabska-Barwinska, A.; Sezener, E.; Wang, J.; Toth, P.; Schmitt, S.; et~al.
  2019.
\newblock Gated linear networks.
\newblock \emph{arXiv preprint arXiv:1910.01526}.

\bibitem[{Witten, Neal, and Cleary(1987)}]{witten1987arithmetic}
Witten, I.~H.; Neal, R.~M.; and Cleary, J.~G. 1987.
\newblock Arithmetic coding for data compression.
\newblock \emph{Communications of the ACM}, 30(6): 520--540.

\end{thebibliography}

\nomenclature[F]{$\rho:\cS\to \Delta \cA$}{Action Model}
\nomenclature[F]{$D_{\rho}(\cdot\mid s):\SetB^{\le n}\to \mathbb{A}^{\le k}$}{Arithmetic decoder with action model $\rho$}
\nomenclature[F]{$C_{\rho}(\cdot \mid s):\cA\to \SetB^{\le n}$}{Arithmetic coder with action mode $\rho$}
\nomenclature[S]{$\cA\subseteq \SetA^{\le k}$}{Action Space}
\nomenclature[S]{$\cS$}{State space}
\nomenclature[S]{$\SetN$}{Set of Natural numbers}
\nomenclature[S]{$\SetD$}{Set of decodable binary strings}
\nomenclature[E]{$t\in\SetN$}{(External) Time}
\nomenclature[E]{$i\in\SetN$}{(Internal) Time}
\nomenclature[E]{$m\in\SetN$}{Finite horizon}
\nomenclature[F]{$V:\cS\to \mathbb{R}$}{Value function}
\nomenclature[F]{$Q:\cS\times \cA \to \mathbb{R}$}{Action-value function}
\nomenclature[F]{$\mathbb{E}$}{Expectation}
\nomenclature[S]{$\cZ$}{Return space}
\nomenclature[E]{$a,a',a_t\in\cA$}{External action}
\nomenclature[F]{$|\cdot |:\cX^* \to \mathbb{N}$}{Length of a string}
\nomenclature[S]{$\SetA$}{Finite alphabet}
\nomenclature[S]{$\SetB = \{0,1\}$}{Binary Set}
\nomenclature[E]{$b\in\SetB$}{Internal action}
\nomenclature[E]{$s,s'\in\cS$}{(External) states}
\nomenclature[E]{$n\in\SetN$}{Maximum length of binary action component of internal state}
\nomenclature[E]{$k\in\SetN$}{Maximum length of external actions}
\nomenclature[E]{$q,q'\in \SetB^*,\SetB^{\le n}$}{Finite string of binary characters}
\nomenclature[F]{$\pi:\cI\to\Delta\SetB$}{Internal policy}
\nomenclature[S]{$\cR$}{Reward set}
\nomenclature[E]{$r,r_t\in\cR$}{Rewards}
\nomenclature[F]{$\mu:\cS\times\cA\to\Delta(\cS\times \cR)$}{(True) External environment}
\nomenclature[E]{$\top\in\SetA$}{Stop/end symbol}
\nomenclature[F]{$\Pi:\cS\to\Delta \cA$}{Uplifted internal policy $\pi$}
\nomenclature[F]{$\vartheta:\cI \times \SetB \to \Delta (\cI \times \cR)$}{Internal environment}
\nomenclature[E]{$\epsilon$}{Empty String}
\nomenclature[S]{$\cX$}{Finite alphabet}
\nomenclature[E]{$x,y\in\cX^n$}{Element of finite alphabet}
\nomenclature[S]{$\SetR$}{Set of real numbers}
\nomenclature[F]{$\xi:\cS\to \Delta \cA$}{Bayesian mixture of action models}
\nomenclature[S]{$\SetD_s$}{Set of decodable binary strings}
\nomenclature[S]{$\cI = \cS\times \SetB^{\le n}$}{Internal state space}
\nomenclature[F]{$\tau:\SetA^{\le k}\to \cA$}{Return everything before first $\top$ function}

\clearpage
\onecolumn
\printnomenclature

\clearpage

\appendix
\section*{Supplementary Material}

\paragraph{Proof of Theorem 2}

\begin{proof}
Instead of expanding the entire action-value function we will show that a single interaction between $\mu$ and $\Pi$ are equal to an interaction between $\vartheta$ and $\pi$. Let $s',r,a'$ be arbitrary states, rewards and actions, then
\begin{align*}
&\mu(s'r|s,a) \Pi(a'|s') \\
&\stackrel{(a)}{=} \mu(s'r|s,a)  \sum_{\substack{q'\in \SetD_{s'} : \\  a'=\tau(D_{\rho}(q'|s'))}} \prod_{i=1}^{|q'|} \pi(q_i'|s'q_{<i}') \\
&\stackrel{(b)}{=} \vartheta(s'\epsilon r|sq,b) \sum_{\substack{q'\in \SetD_{s'} : \\  a'=\tau(D_{\rho}(q'|s'))}} \prod_{i=1}^{|q'|} \pi(q_i'|s'q_{<i}') \\
&\stackrel{(c)}{=} \sum_{\substack{q'\in \SetD_{s'} : \\  a'=\tau(D_{\rho}(q'|s'))} } \vartheta(s'\epsilon r|sq,b)\pi(q_1'|s'\epsilon) \times  \\
&~~~~~~~~~~~~~~~~~~~~\prod_{i=2}^{|q'|}\vartheta(s'q_{<i}' 0|s'q_{<i-1}',q_{i-1}')  \pi(q_i'|s'q_{<i}').
\end{align*}

Step (a) comes from Equation \ref{eq:uplifted_pi}; step (b) comes from the first case of Definition \ref{def:internal_env}; step (c) comes from the second case of Definition \ref{def:internal_env}, where $ \vartheta(s'q_{<i}' 0|s'q_{<i-1}',q_{i-1}') =1$.

Therefore in the expansion of the action-value function we can replace one with the other,

\begin{align*}
    Q_{\mu}^{\Pi} (s,a) &= \E_{\mu}^{\Pi} \!\left[\sum_{t=1}^m r_t|s,a\right]\\
    &= \sum_{s_1,r_1,a_1} \mu(s_1r_1|s,a) \Pi(a_1|s_1) \ldots \sum_{s_m,r_m,a_m} \mu(s_m r_m|s_{m-1},a_{m-1}) \Pi(a_m|s_m) \sum_{t=1}^m r_t \\
    &= \sum_{s_1,r_1,a_1} \left( \sum_{\substack{q'\in \SetD_{s_1} : \\  a_1=\tau(D_{\rho}(q'|s_1))} } \vartheta(s_1\epsilon r_1|sq,b)\pi(q_1'|s_1\epsilon)\prod_{i=2}^{|q'|}\vartheta(s_1q_{<i}' 0|s_1q_{<i-1}',q_{i-1}')  \pi(q_i'|s_1q_{<i}')  \right) \\
&\ \ldots\sum_{s_m,r_m,a_m}\left( \sum_{\substack{q'\in \SetD_{s_m} : \\  a_m=\tau(D_{\rho}(q'|s_m))} } \vartheta(s_m\epsilon r_1|s_{m-1}\epsilon ,b)\pi(q_1'|s_m\epsilon) \prod_{i=2}^{|q'|}\vartheta(s_mq_{<i}' 0|s_mq_{<i-1}',q_{i-1}')  \pi(q_i'|s_mq_{<i}')  \right) \\
&\sum_{t=1}^m r_t \\
    &= \E_{\vartheta}^{\pi} \!\left[\sum_{t=1}^m r_t|sq,b\right]\!\! 
    = Q_{\vartheta}^{\pi}(sq,b)
\end{align*}\vspace*{-3ex}

Notice that the horizon is defined externally, i.e. an $m$-horizon return in the external environment corresponds to $m$ decoded external actions, and not the number of internal actions taken.
\end{proof}

\paragraph{Proof of Proposition 3}
\begin{proof} ($\Rightarrow$)

 Let $\cN:=\{x_{1:\infty}:\theta(x_{1:n})\not\to\theta_0\}$ be the set of sequences on which convergence fails.
The consistency assumption implies 
\begin{align*}
    0 &~=~ P_{\theta_0}[\cN]\\
    &~=~ \sum_{x_{1:k}\in\cX^k} P_{\theta_0}[\cN|x_{1:k}] P_{\theta_0}[x_{1:k}] \\
    &~\geq~ P_{\theta_0}[\cN|x_{1:k}] P_{\theta_0}[x_{1:k}].
\end{align*} 
Since $P_{\theta_0}[x_{1:k}]>0$ by assumption, the RHS\ can only be $0$ if $P_{\theta_0}[\cN|x_{1:k}]=0$,
which implies $P_{\theta_0}[x_{1:k}\times\cX^{\infty}\setminus\cN|x_{1:k}]=1$,
hence $\theta(x_{1:k}X_{k+1:n})\to\theta_0$ with $P_{\theta_0}[\cdot|x_{1:k}]$-probability 1.

($\Leftarrow$)

Again let $\cN:=\{x_{1:\infty}:\theta(X_{1:n})\not\to\theta_0\}$ be the set of sequences on which convergence fails.
The consistency assumption implies that for all $x_{1:k}$ for which $P_{\theta_0}[x_{1:k}]>0$, we have $P_{\theta_0}[x_{1:k}\times\cX^{\infty}\setminus\cN|x_{1:k}]=1$, which implies that $P_{\theta_0}[\cN | x_{1:k}]=0$. Therefore, 
\[ 
  P_{\theta_0}[\cN] ~=~ \sum_{x_{1:k}\in\cX^k:P_{\theta_0}[x_{1:k}]>0} P_{\theta_0}[\cN|x_{1:k}] P_{\theta_0}[x_{1:k}] = 0 
\]
which implies $P_{\theta_0}[\cX^{\infty}\setminus\cN]=1$,
hence $\theta(X_{1:n})\to\theta_0$ with $P_{\theta_0}$-probability 1.
\end{proof}

\end{document}